\documentclass{article}

\usepackage{microtype}
\usepackage{graphicx}
\usepackage{subfig}
\usepackage{booktabs} 
\usepackage{hyperref}
\usepackage{amsthm,amsfonts,amsmath}
\usepackage{verbatim}

\newtheorem*{theorem*}{Theorem}

\newtheorem*{lemma*}{Lemma}
\newtheorem{lemma}{Lemma}
\usepackage[accepted]{icml2019}

\icmltitlerunning{Learning State Abstractions for Transfer in Continuous Control}


\newcommand{\p}[0]{\textrm{Pr}}
\newcommand{\FancyS}{\mathcal{S}}

\newcommand{\FancyC}{\mathcal{C}}
\newcommand{\FancyF}{\mathcal{F}}
\newcommand{\FancyG}{\mathcal{G}}
\newcommand{\FancyX}{\mathcal{X}}

\newcommand{\piPhi}{\pi^{\phi}}

\DeclareMathOperator*{\argmax}{arg\,max}

\newcommand{\rademacher}[1]{\text{Rad}(#1)}
\newcommand{\expected}[2]{\mathbb{E}_{#1}\Big[ #2 \Big]}

\newcommand{\elone}[1]{\left\lVert#1\right\rVert_1}


%

\begin{document}

\twocolumn[
\icmltitle{Learning State Abstractions for Transfer in Continuous Control}



\icmlsetsymbol{equal}{*}

\begin{icmlauthorlist}
\icmlauthor{Kavosh Asadi}{equal,b}
\icmlauthor{David Abel}{equal,b}
\icmlauthor{Michael Littman}{b}
\end{icmlauthorlist}
\icmlaffiliation{b}{Department of Computer Science, Brown University}
\icmlcorrespondingauthor{Kavosh Asadi}{k8@brown.edu}

\icmlkeywords{Reinforcement Learning, Continuous Control, MDP, Abstraction, State Abstraction}

\vskip 0.3in
]

\printAffiliationsAndNotice{\icmlEqualContribution} 

\begin{abstract}
Can simple algorithms with a good representation solve challenging reinforcement learning problems?
In this work, we answer this question in the affirmative, where we take ``simple learning algorithm" to be tabular Q-Learning, the ``good representations" to be a learned state abstraction, and ``challenging problems" to be continuous control tasks.
Our main contribution is a learning algorithm that abstracts a continuous state-space into a discrete one. We transfer this learned representation to unseen problems to enable effective learning. We provide theory showing that learned abstractions maintain a bounded value loss, and we report experiments showing that the abstractions empower tabular Q-Learning to learn efficiently in unseen tasks. 
\end{abstract}

\section{Introduction}

Finding the right representation is critical for effective reinforcement learning (RL). A domain like Backgammon can be tractable given a simple representation~\cite{tesauro1995temporal}, but replace this typical input stream with a 3D point cloud of a Backgammon board, and the problem becomes intractable~\cite{konidaris2019necessity}. In this context, representation learning could be thought of as a learning process that enables faster learning in future. In this paper we explore the limits of this reasoning. Specifically, we study whether unseen continuous control problems can be solved by simple tabular RL algorithms using a state-abstraction function learned from previous problems.

Identifying useful abstractions has long been a goal of AI and RL; indeed, understanding abstraction's role in intelligence was one of the areas of inquiry of the 1956 Dartmouth Summer AI Workshop~\cite{mccarthy2006proposal}, and has been an active area of study since~\cite{brooks1991intelligence,dayan1993feudal,singh1995reinforcement,sutton1999between,parr1998reinforcement,dietterich2000hierarchical,li2006towards}. We narrow this study by concentrating on state abstractions that translate a continuous state space into a small discrete one, suitable for use with ``tabular" learning algorithms which are simple and well-understood \cite{sutton2018reinforcement}. Prior work has studied a similar setting leading to tree-based methods for abstracting continuous state spaces~\cite{moore1994parti,uther1998tree,feng2004dynamic,menashe2018state}, algorithms for finding state abstractions suitable for transfer~\cite{walsh2006transferring,cobo2011automatic,cobo2012automatic,abel2018salrl}, and methods for learning succinct representations of continuous states~\cite{whiteson2007adaptive,jonschkowski2015learning}.

\begin{figure}%
    \centering
    \subfloat{{\includegraphics[width=0.42\columnwidth]{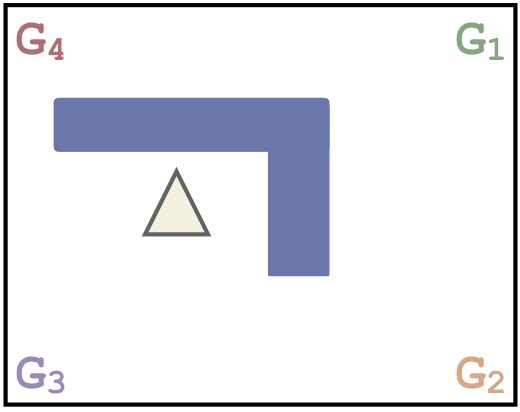} }} \hspace{3mm}
    \subfloat{{\includegraphics[width=0.42\columnwidth]{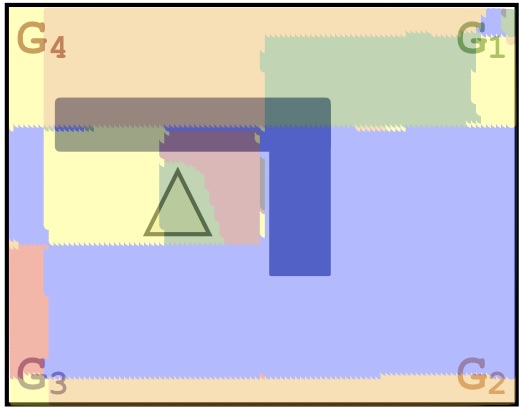}} }%
    \caption{Learned state abstractions (right) in the Puddle World domain (left), where goal location is one of the four corners. The abstraction function is trained using three randomly chosen problems, and is then tested to solve the fourth problem. Notice that states that generally have the same optimal policy are clustered together in a common abstract state.}%
    \label{fig:example}%
\end{figure}

More concretely, we introduce and analyze an algorithm for learning a state abstraction. We then transfer the learned abstraction to help perform efficient reinforcement learning in unseen problems. Notably, this algorithm is well suited to continuous domains---after training, it outputs a state abstraction that maps continuous states into a small, finite state space, suitable for use with tabular RL algorithms. We present initial analysis and support for this abstraction-learning algorithm, emphasizing its ability to enable downstream tabular RL algorithms to solve new problems not seen during the training of the abstraction. 

We provide a theorem on the sample complexity of learning the abstraction function, allowing us to relate the value loss under the learned abstraction function to both the number of samples used for training the abstraction function and to the Rademacher complexity of the set of abstraction functions in the hypothesis space. Moreover, our central experimental finding is that the learned state abstraction enables tabular Q-learning to perform sample efficient reinforcement learning in problems with continuous state spaces. 

An example of a learned abstraction function in the Puddle World domain is shown in Figure \ref{fig:example}. In section \ref{section:explain_algorithm} we detail the algorithm used to learn these abstractions.

\section{Background}

We begin with background on RL and state abstraction.

We take the standard treatment of RL~\cite{sutton2018reinforcement}: An agent learns to make decisions that maximize reward in an environment through interaction alone. We make the usual assumption that the environment can be modeled by a Markov Decision Process (MDP)~\cite{puterman2014markov}.

State abstraction describes methods for reducing the size of an MDP's state space, typically by aggregating states together in the environmental MDP. With a smaller state space that preserves some characteristics, RL algorithms can often learn to make good decisions from fewer samples and with less computation.

More formally, a state abstraction is a function, $\phi : \FancyS \rightarrow \FancyC$, that maps each ground state, $s \in \FancyS$, into an abstract state, $c \in \FancyC$. Usually, such state abstractions are chosen so that $|\FancyC| \ll |\FancyS|$, thus the state space is more tractable to work with (either by making exploration, planning, or other aspects of decision making easier).

When the environment's underlying state space is continuous however, a natural family of relevant abstractions are those that translate the continuous state space into a discrete one. Such functions can dramatically simplify problems that are otherwise intractable for certain types of RL algorithms~\cite{moore1994parti,uther1998tree,lee2004adaptive}, like  traditional Q-Learning~\cite{watkins1992q} and TD-Learning~\cite{sutton1988learning}.

In order to study the sample complexity of our abstraction learning algorithm we use a tool from statistical learning theory referred to as Rademacher complexity \cite{bartlett2002rademacher,mohri2018foundations}. Consider a function $f:\FancyS\mapsto [-1,1]$, and an arbitrarily large set of such functions $\FancyF$. We define Rademacher complexity of this set, $\rademacher{\FancyF}$, as follows:
$$\rademacher{\FancyF}:=\expected{X_j,\sigma_j}{\sup_{f\in\FancyF}\frac{1}{n}\sum_{j=1}^n \sigma_j f(X_j)}\ ,$$
where $\sigma_j$, referred to as Rademacher random variables, are drawn uniformly at random from $\{\pm 1\}$. One could think of these variables as independent and identically distributed noise. With this perspective, the average $\frac{1}{n}\sum_{j=1}^n \sigma_j f(X_j)$ can be thought of as the covariance between $f(\cdot)$ and noise, or in other words, how well $f(\cdot)$ matches the noise. If for any realization of noise there exists an $f\in\FancyF$ for which this average is large, then we can accurately learn noise, and the Rademacher complexity is large.

We can extend the previous Rademacher definition to vector-valued functions that map to $[-1,1]^{m}$. Imagine a function $g:=\langle f_1,...,f_m \rangle$ where $\forall\ i\ f_i \in \FancyF$. Define the set of such functions $\FancyG$. We similarly define the Rademacher complexity of $\FancyG$ as follows:
$$\rademacher{\FancyG}:=\expected{X_j,\sigma_{ji}}{\sup_{g\in\FancyG}\frac{1}{n}\sum_{j=1}^n\sum_{i=1}^m \sigma_{ji} g(X_j)_i}\ ,$$
where $\sigma_{ji}$ are drawn uniformly randomly from $\{\pm 1\}$.

\section{Related Work}


We now summarize relevant prior literature. 

We study state abstractions that map from continuous states to discrete ones---in this sense, we offer a method for learning to discretize the states of a continuous MDP. Prior literature has introduced algorithms for the same purpose. \citet{moore1994parti} introduced the Parti-Game algorithm, which uses a decision tree to dynamically partition a continuous state space based on the need for further exploration. That is, as data about the underlying environment is collected, state partitions are refined depending on a minimax score with respect to an adversary that prevents the learning algorithm from reaching the goal (and knows the current partitioning scheme). 
For Parti-Game to be applied, we must assume 1) the transition function is deterministic, 2) the MDP is goal-based and the goal state is known, and 3) a local greedy controller is available. 

\citet{feng2004dynamic} also make use of a tree-based approach---this time, kd-trees~\cite{friedman1976algorithm}---to dynamically partition a continuous MDP's state space into discrete regions. In contrast to Parti-Game, partitions are chosen based on value equivalence, thereby enabling a form of closure under the Bellman Equation. 

\citet{chapman1991input} study tree-based partitioning as a means of generalizing knowledge in RL, leading to the development of the G algorithm. The G algorithm constructs a data-dependent tree of Q-value partitions based on which Q value can adequately summarize different regions of the state space. Over time, the tree will grow to sufficiently represent the needed distinctions in states. 
Further work uses decision trees of different forms to partition complex (and often continuous) state spaces into discrete models~\cite{mccallum1996learning,uther1998tree}.

\citet{menashe2018state} also introduced an algorithm for abstracting continuous state, with the goal of inducing a small, tractable decision problem. They present RCAST (Recursive Cluster-based Abstraction Synthesis Technique), a technique for constructing a state abstraction that maps continuous states to discrete ones. Like the other tree-based methods discussed, RCAST uses kd-trees to partition the state space. The key insight at the core of RCAST is to partition based on the ability to predict different {\it factors} that characterize the state space. This is the main departure between the approach of RCAST and our own algorithm, which is not tailored to factored representations.~\citet{krose1992adaptive}, who present an adaptive quantization for continuous state spaces, to be used with simple RL algorithms like TD-learning.~\citet{lee2004adaptive} offer a similar technique based on vector quantization that performs partitioning based directly on data gathered by TD learning algorithm. \citet{whiteson2007adaptive} introduce a method for adaptive tile coding, to be used in value function approximation. \citet{liang2016state} presented evidence that ``shallow" learning algorithms can achieve competitive performance to many Deep RL algorithms in Atari games. Their approach constructs features that are well suited to the structure of Atari games, including properties like relative object and color locations. The main result of the work shows that with a well crafted set of features, even a simple learning algorithm can achieve competitive scores in Atari games. 

We are not the first to explore transferring state abstractions. Indeed, \citet{walsh2006transferring} studied the process of transferring state abstractions across MDPs drawn from the same distribution. ~\citet{abel2018salrl} builds on this work by introducing PAC abstractions, which are guaranteed to retain their usefulness over a distribution of tasks. Notably, similar to our main theoretical result, PAC abstractions also retain value with respect to a distribution of MDPs with high probability based on the value loss of the abstraction family~\cite{abel2016near}. The crucial difference is that, again, PAC abstractions are tailored to discrete state spaces, and do not extend to continuous ones. \citet{cobo2011automatic} and ~\citet{cobo2012automatic} study methods for finding state abstractions based on a demonstrator's behavior. Similarly to our approach, their method is based on finding abstract states that can be used to {\it predict} what a demonstrator will do in those clusters. The key differentiating factor is that our algorithm targets continuous state spaces, while theirs focuses on discrete state spaces.


\citet{majeed2018q} discussed Q-Learning convergence in non-Markov decision problems. In particular, Theorem 7 of their work reports that Q-Learning converges in some non-MDPs, under relatively mild assumptions. They go on to characterize the necessary and sufficient conditions for Q-Learning convergence, which does in fact extend to some non-MDPs. In this sense, their work builds on Theorem 4 from~\citet{li2006towards} which states that Q-Learning, using a policy-based abstraction (similar to what we learn), can sometimes converge to a policy that is sub-optimal in the original problem. Indeed, in experiments, we find this to not be the case -- it is still an open important question as to how certain abstractions effect both the estimation error and sample complexity of different learning algorithms.

A separate but equally relevant body of literature investigates learning state {\it representations} in the context of deep neural networks, typically for use in deep RL. For instance, \citet{jonschkowski2015learning} proposed learning state representations through a set of well chosen ontological priors, catered toward robotics tasks, including a simplicity prior and a causality prior (among others). These priors are then encoded into an optimization problem that seeks to jointly optimize over each of their prescribed properties. They conduct experiments in a continuous grid domain similar to Puddle World, comparing the use of {\it all} priors to each one individually, showcasing a consistent performance increase in learning with all priors. \cite{karl2016deep} developed a variational Bayes method for learning a latent state-space representation of a Markov model, given high dimensional observations. Critically, this state space is of a simple Markov model, and does not involve decision making or rewards, which are critical aspects of learning state representations in MDPs~\cite{oh2017value}. For a full survey of recent state representation schemes for deep RL, see \citet{lesort2018state}.

Naturally, many active areas of recent literature explore transfer for RL. For instance, work on learning to learn~\cite{thrun1998learning}, or meta RL~\cite{finn2017model}, investigate how to explicitly improve sample efficiency across a collection of tasks. Recent work in meta RL has studied how to improve model-based RL~\cite{saemundsson2018meta} and how to learn a new RL algorithm explicitly~\cite{wang2016learning}. Other areas of research have explored transferring shaping functions~\cite{konidaris2006autonomous,konidaris2012transfer}, successor features~\cite{barreto2017successor}, skills~\cite{konidaris2007building,da2012learning,brunskill2014pac}, hierarchies~\cite{wilson2007multi,mehta2008transfer,tessler2017deep}, behavior~\cite{taylor2005behavior}, and transfer for deep RL~\cite{higgins2017darla,parisotto2015actor,teh2017distral}. For a survey of transfer in RL see~\citet{taylor2009transfer}.

\section{Algorithm}
\label{section:explain_algorithm}
In this section, we formulate an optimization problem and propose an algorithm for learning state abstractions for MDPs with continuous states. Our approach builds on a type of abstraction studied by \citet{li2006towards} in the tabular setting. Specifically, the abstraction functions attempts to cluster together the states for which the optimal policy is similar and map these states to a common abstract state. Our goal will be to learn such an abstraction function on a set of training problems, and then transfer and use this representation for reinforcement learning in unseen problems.

We define a stochastic abstraction function $\phi: \mathcal{S}\mapsto\p(\FancyC)$ as a function that maps from ground states $s\in \FancyS$ to a probability distribution over abstract states $\FancyC$~\cite{singh1995reinforcement}. We focus on problems where $\FancyS$ is infinite, but $\FancyC$ is finite (problems with continuous state space and an abstraction that maps to a discrete one). Our goal is to find a state abstraction $\phi$ that enables simple learning algorithms to perform data-efficient reinforcement learning on new tasks. 

To learn an abstraction function, we introduce an objective that measures the probability of trajectories $\tau^{i}$ provided by our learned policy $\pi^{*}$. The goal will be to maximize this probability if we were to use the abstraction function $\phi$ and a policy, $\piPhi$, over abstract states. We formulate the optimization problem as follows:
\begin{equation*} \argmax_{\phi,\piPhi} \Pi_{i=1}^{M} \p(\tau^{i},\phi,\piPhi)=\argmax_{\phi,\piPhi} \sum_{i=1}^{M}\log  \p(\tau^{i},\phi,\piPhi)\label{original_problem}\ ,\end{equation*}
where:
\begin{eqnarray*}
\log \p(\tau^{i},\phi,\piPhi)&=&\log\Pi_{j=1}^{T({\tau}^i)}\pi(a^{\tau^i}_{j}|s_{j})\p(s^{\tau^i}_{j+1}|s^{\tau^i}_{j},a^{\tau^{i}}_{j})\\
&=&\sum_{j=1}^{T({\tau^i})} \log \sum_{c}\phi(c\mid s_{j}^{{\tau}^i})\pi^{\phi}(a_{j}^{{\tau}^i}\mid c)\\
&+&\sum_{j=1}^{T({\tau^i})}\log\p(s^{\tau^i}_{j+1}\mid s^{\tau^i}_{j},a^{\tau^i}_{j})\ .
\end{eqnarray*}
The second sum is not a function of the optimization variables, and could be dropped:
\begin{equation}
\begin{aligned}
&\argmax_{\phi,\piPhi} \sum_{i=1}^{M}\log \p(\tau^{i},\phi,\piPhi)\nonumber \\
&=\argmax_{\phi,\piPhi} \sum_{i=1}^{M}\sum_{j=1}^{T({\tau}^i)}\log \sum_{c}\phi(c\mid s_{j}^{{\tau}^i})\pi^{\phi}(a_{j}^{{\tau}^i}\mid c)
\end{aligned}
\end{equation}
More importantly, we consider the case where our training set consists of $K$ different MDPs. In this case, we solve for a generalization of the above optimization problem, namely:
\begin{equation*}
\argmax_{\phi,\piPhi}\sum_{k=1}^K \sum_{i=1}^{M}\sum_{j=1}^{T({\tau}^i)}\log \sum_{c}\phi(c\mid s_{j}^{{\tau}^i})\pi^{\phi}(a_{j}^{{\tau}^i}\mid c,k)\ .
\end{equation*}
If the solution to the optimization problem is accurate enough, then states that are clustered into a single abstract state generally have a similar policy in the MDPs used for training.

Although it is possible to jointly solve this optimization problem, for simplicity we assume $\pi^{\phi}$ is fixed and provided to the learner. We parameterize the abstraction function $\phi$ by a vector $\theta$ representing the weights of the network $\phi(\cdot|s;\theta)$. We use softmax activation to ensure that $\phi$ outputs a probability distribution.

A good setting of $\theta$ can be found by performing stochastic gradient ascent on the objective above, as is standard when optimizing neural networks \cite{lecun2015deep}:
$$\theta\!\leftarrow\!\theta + \alpha \nabla_{\theta} \sum_{k=1}^K\sum_{i=1}^{M}\sum_{j=1}^{T({\tau}^i)}\log \sum_{c}\phi(c\mid s_{j}^{{\tau}^i};\theta)\pi^{\phi}(a_{j}^{{\tau}^i}\mid c,k)$$
In our experiments we used the Adam optimizer \cite{kingma2014adam} which can be thought of as an advanced extension of vanilla stochastic gradient ascent.

\section{Theory}
Given a learned state abstraction, it is natural to ask if the policy over the learned abstract state space can achieve a reasonable value. In this section we answer this question positively by providing a bound on the value loss of the abstract policy relative to the demonstrator policy. 

We now outline our proof at a high level. The first step shows the abstraction learning algorithm described in Section (\ref{section:explain_algorithm}) has bounded $\elone{\cdot}$ policy difference on expectation under states in the training set. We then use Rademacher complexity to generalize this policy difference to any set of states drawn from the same distribution. Given the generalization bound, we use the following lemma to conclude that the abstraction has bounded value loss.
\begin{lemma}(Corollary of Lemma 2 in~\citet{abel2019rlit})
Consider two stochastic policies, $\pi_1$ and $\pi_2$ on state space $\FancyS$, and a fixed probability distribution over $\FancyS$, $p(s)$. If, for some $k \in \mathbb{R}_{\geq 0}$:
    \begin{equation*}
     \underset{p(s)}{\mathbb{E}}\left[||\pi_1(a \mid s) - \pi_2(a \mid s)||_1 \right] \leq k,
    \end{equation*}
    then:
    \begin{equation*}
         \underset{p(s)}{\mathbb{E}}\left[ V^{\pi_1}(s) - V^{\pi_2}(s)\right] \leq \frac{k\textsc{RMax}}{1-\gamma}.
    \end{equation*}
 \label{lem:l1_val_bound}
\end{lemma}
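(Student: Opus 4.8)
The plan is to control the pointwise value gap $\delta(s):=V^{\pi_1}(s)-V^{\pi_2}(s)$ through a telescoping Bellman identity and then take the expectation under $p$. First I would write, for each fixed $s$,
\[
\delta(s)\;=\;\sum_a \pi_1(a\mid s)\,Q^{\pi_1}(s,a)\;-\;\sum_a \pi_2(a\mid s)\,Q^{\pi_2}(s,a),
\]
and insert and cancel the mixed term $\sum_a \pi_2(a\mid s)\,Q^{\pi_1}(s,a)$. This decomposes $\delta(s)$ into a \emph{policy-mismatch term} $\sum_a\bigl(\pi_1(a\mid s)-\pi_2(a\mid s)\bigr)Q^{\pi_1}(s,a)$ and a \emph{recursion term} $\sum_a \pi_2(a\mid s)\bigl(Q^{\pi_1}(s,a)-Q^{\pi_2}(s,a)\bigr)$, and the latter equals $\gamma\,\mathbb{E}\bigl[\delta(s')\bigr]$ for $s'$ obtained by drawing $a\sim\pi_2(\cdot\mid s)$ and transitioning.

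Second, I would bound the policy-mismatch term by H\"older's inequality, $\sum_a(\pi_1-\pi_2)(a\mid s)Q^{\pi_1}(s,a)\le \elone{\pi_1(\cdot\mid s)-\pi_2(\cdot\mid s)}\cdot\|Q^{\pi_1}(s,\cdot)\|_\infty$, together with the standard value bound $\|Q^{\pi_1}\|_\infty\le \textsc{RMax}/(1-\gamma)$ (the mismatch term is also invariant to adding a constant to $Q^{\pi_1}(s,\cdot)$, which is the slack one exploits when chasing the exact constant). Unrolling the recursion then expresses the averaged gap as a discounted sum of policy-mismatch terms along $\pi_2$-rollouts,
\[
\expected{p(s)}{\delta(s)}\;\le\;\frac{\textsc{RMax}}{1-\gamma}\sum_{t\ge 0}\gamma^{t}\,\mathbb{E}_{s_t}\!\bigl[\elone{\pi_1(\cdot\mid s_t)-\pi_2(\cdot\mid s_t)}\bigr],
\]
where $s_t$ denotes the time-$t$ state marginal produced by running $\pi_2$ from $p$; equivalently one may read this off the performance-difference identity written against a discounted occupancy measure.

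Third, I would invoke the hypothesis $\expected{p(s)}{\elone{\pi_1(\cdot\mid s)-\pi_2(\cdot\mid s)}}\le k$. In the setting this corollary is used --- where $p$ is the state distribution induced by rolling out the demonstrator, so it is (or can be taken to be) invariant under the dynamics, as in Lemma~2 of~\citet{abel2019rlit} --- every marginal $s_t$ is again distributed as $p$, so each summand is at most $k$; summing $\sum_{t\ge 0}\gamma^{t}=1/(1-\gamma)$ then delivers the claimed bound $\expected{p(s)}{\delta(s)}\le k\,\textsc{RMax}/(1-\gamma)$.

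The step I expect to be the main obstacle is exactly this last one. The hypothesis only controls the expected $\ell_1$ policy gap under the single distribution $p$, whereas the telescoped inequality, taken at face value, asks for that control under every forward-pushed marginal; closing the gap requires either restricting $p$ to be stationary/occupancy (the regime of Abel's lemma) or a more delicate one-sided argument. The remaining ingredients --- the Bellman decomposition, H\"older's inequality, and the geometric sum --- are routine.
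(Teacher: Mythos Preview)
The paper does not prove this lemma; it is imported verbatim as a corollary of Lemma~2 in \citet{abel2019rlit}, so there is no in-paper argument to compare against. Your Bellman telescoping plus H\"older plus geometric unrolling is the standard route to such a statement, and you have correctly isolated the real subtlety: the hypothesis controls the $\ell_1$ gap only under $p$, while the unrolled bound needs it under every $\pi_2$-pushed marginal, which is why one must read $p$ as a stationary (or discounted-occupancy) distribution to close the loop.

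There is, however, a concrete arithmetic slip in your last step. You bound the mismatch term using $\|Q^{\pi_1}\|_\infty\le \textsc{RMax}/(1-\gamma)$, which puts the prefactor $\textsc{RMax}/(1-\gamma)$ in front of the discounted sum; then you multiply by $\sum_{t\ge 0}\gamma^t=1/(1-\gamma)$ and announce $k\,\textsc{RMax}/(1-\gamma)$. That product is $k\,\textsc{RMax}/(1-\gamma)^2$, not $k\,\textsc{RMax}/(1-\gamma)$. The centering trick you mention (subtracting a constant from $Q^{\pi_1}(s,\cdot)$, which is harmless because $\sum_a(\pi_1-\pi_2)=0$) buys at most a factor of $2$, not a factor of $(1-\gamma)$, so it does not rescue the exponent. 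Either the cited lemma is using a different normalization (e.g.\ $\textsc{RMax}$ as a value bound rather than a per-step reward bound, or $p$ taken to be the $\gamma$-discounted occupancy so that the geometric sum is already absorbed), or the single $(1-\gamma)$ in the statement is loose; in any case, your derivation as written does not land on the constant claimed. If you want to match the stated bound exactly, you should track which convention \citet{abel2019rlit} uses and adjust the H\"older step accordingly.
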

To satisfy the assumption of the lemma, we first show that the objective function can be rewritten using Kullback-Leibler (KL) divergence. For the rest of this section, we assume that each cluster $c_a$ assigns probability 1 to the action $a$, allowing us to simplify notation:
\begin{eqnarray*}
\lefteqn{\argmax_{\phi} \sum_{i=1}^{M}\sum_{j=1}^{T({\tau}^i)}\log \sum_{c}\phi(c\mid s^{{\tau}^i}_j)\piPhi(a^{{\tau}^i}_j\mid c)}\\
&=&\argmax_{\phi}\big\{\expected{}{\log \phi(c_a\mid s)}-\expected{}{\log \pi^{*}(a\mid s)}\big\}\\
&=&\argmax_{\phi} \expected{}{\log\frac{ \phi(c_a\mid s)}{\pi^{*}(a|s)}}\\
&=&\argmax_{\phi} \expected{s\sim d^{\pi^{*}}}{K\!L\big(\pi^{*}(\cdot\mid s)||\phi(\cdot\mid s)\big)}\ .
\end{eqnarray*}
Now, suppose that our training procedure has a bounded KL loss formalized below:
$$\frac{1}{n}\sum_{j=1}^{n}\sqrt{2 K\!L\big(\pi^{*}(\cdot\mid s_j)||\phi(\cdot\mid s_j)\big)}\leq \Delta.$$
Using Pinsker's inequality, we have:
$$\frac{1}{n}\sum_{j=1}^{n}\elone{\pi^{*}(\cdot\mid s_j)-\phi(\cdot\mid s_j)}\leq \frac{\Delta}{2}.$$
That is, we have a bounded $\elone{\cdot}$ policy difference. Critically, this bound is only valid given set of states seen in the training data, but to leverage Lemma~\ref{lem:l1_val_bound}, we need to bound the generalization error. We get this result by introducing a theorem. Lemma below is used in the theorem.
\begin{lemma}(Corollary 4 in \cite{maurer2016vector}) Assume a set of vector valued functions $\FancyG=\{g:\FancyX \mapsto \mathbb{R}^{m}\}$. Assume $n$ L-Lipschitz functions $l_j\ \forall j\in \{1,...,n\}$. Then, the following inequality holds:
\begin{equation*}
\begin{aligned}
 &\expected{X_j,\sigma_j}{\sup_{g\in\FancyG}\frac{1}{n}\sum_{j=1}^n \sigma_j l_j\big(g(X_j)\big)}\\
 &\leq\sqrt{2}L \expected{X_j,\sigma_{ji}}{\sup_{g\in\FancyG}\frac{1}{n}\sum_{j=1}^n\sum_{i=1}^m \sigma_{ji} g(X_j)_i} \ .
\end{aligned}
\end{equation*}
\label{maurer_lemma}
\end{lemma}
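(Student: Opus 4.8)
The plan is to reconstruct Maurer's argument: reduce the vector-contraction inequality to a single ``one-coordinate swap'' lemma and then apply that lemma $n$ times, peeling off one data point at a time. Throughout I condition on the sample $X_1,\dots,X_n$, so it suffices to prove the inequality for the Rademacher averages with the points held fixed; taking the outer expectation over the $X_j$ at the end recovers the stated bound. I also assume $\FancyG$ is finite (or invoke the usual separability/measurability conventions) so that every supremum of a family of random variables is well behaved.

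The crux is the following one-step lemma. Let $S$ be any index set, $X:S\to\real$ an arbitrary function (a ``residual''), $f:S\to\real$ and $h:S\to\real^m$ functions with $|f(s)-f(s')|\le \|h(s)-h(s')\|_2$ for all $s,s'$, and let $\sigma_0,\sigma_1,\dots,\sigma_m$ be independent Rademacher variables. Then
\[ \mathbb{E}_{\sigma_0}\sup_{s\in S}\big[X(s)+\sigma_0 f(s)\big]\;\le\;\mathbb{E}_{\sigma_{1:m}}\sup_{s\in S}\Big[X(s)+\sqrt{2}\,\textstyle\sum_{i=1}^m \sigma_i h(s)_i\Big]. \]
To prove it, first write $\mathbb{E}_{\sigma_0}\sup_s[X(s)+\sigma_0 f(s)]=\tfrac12\sup_s[X(s)+f(s)]+\tfrac12\sup_s[X(s)-f(s)]=\tfrac12\sup_{s,s'}\big[X(s)+X(s')+f(s)-f(s')\big]$. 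The Lipschitz hypothesis replaces $f(s)-f(s')$ by $\|h(s)-h(s')\|_2$, and the sharp lower Khintchine inequality $\|a\|_2\le\sqrt2\,\mathbb{E}_\sigma|\langle\sigma,a\rangle|$ (Szarek's constant) bounds this in turn by $\sqrt2\,\mathbb{E}_{\sigma_{1:m}}\big|\sum_i\sigma_i(h(s)_i-h(s')_i)\big|$. Pulling $\mathbb{E}_{\sigma_{1:m}}$ outside the supremum by Jensen, writing the absolute value as a maximum over the two sign choices, and noting that the resulting $\sup_{s,s'}$ decouples, one arrives at $\tfrac12\mathbb{E}_{\sigma_{1:m}}\big[\sup_s(X(s)+\sqrt2\langle\sigma,h(s)\rangle)+\sup_{s'}(X(s')-\sqrt2\langle\sigma,h(s')\rangle)\big]$; the two terms have equal expectation by sign-symmetry of the Rademacher vector, which gives the claimed bound.

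Given the one-step lemma, for $p=0,1,\dots,n$ define the hybrid quantity
\[ A_p\;=\;\mathbb{E}_{\sigma}\sup_{g\in\FancyG}\Big[\,\sqrt2\,L\!\sum_{j=1}^{p}\sum_{i=1}^m \sigma_{ji}\,g(X_j)_i\;+\;\sum_{j=p+1}^{n}\sigma_j\, l_j\big(g(X_j)\big)\Big]. \]
Apply the one-step lemma with $S=\FancyG$, with $X(\cdot)$ equal to the sum of all terms that do not involve point $p+1$, with $f(g)=l_{p+1}(g(X_{p+1}))$ and $h(g)=L\,g(X_{p+1})\in\real^m$ — so that $L$-Lipschitzness of $l_{p+1}$ with respect to the Euclidean norm is exactly the hypothesis $|f(g)-f(g')|\le\|h(g)-h(g')\|_2$ — and then take the expectation over the remaining Rademacher variables: this shows $A_p\le A_{p+1}$, hence $A_0\le A_n$. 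Since $A_0$ equals $n$ times the (conditional) left-hand side of the lemma and $A_n$ equals $\sqrt2\,L$ times $n$ times the (conditional) right-hand side, dividing by $n$ and integrating over $X_1,\dots,X_n$ finishes the proof.

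The main obstacle is the one-step lemma, and two points inside it deserve care. First, getting the constant $\sqrt2$ exactly requires the \emph{sharp} lower Khintchine bound $\mathbb{E}_\sigma|\langle\sigma,a\rangle|\ge\|a\|_2/\sqrt2$; any weaker constant propagates directly into the theorem, so one must either cite Szarek's result or settle for a worse constant. Second, the step that moves $\mathbb{E}_\sigma$ through the supremum and turns $\sup_{s,s'}|\langle\sigma,h(s)-h(s')\rangle|$ into a sum of two decoupled suprema needs a little bookkeeping and uses the sign-symmetry of the Rademacher vector crucially. Finally, it is worth flagging that \emph{$L$-Lipschitz} here must mean Lipschitz with respect to the Euclidean norm on $\real^m$ (this is what matches $\|h(s)-h(s')\|_2$); for other norms the constant and the argument change, and in the scalar case $m=1$ the lemma degenerates to the classical Ledoux--Talagrand contraction principle, albeit with a suboptimal constant.
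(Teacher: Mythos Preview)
The paper does not prove this lemma at all: it is stated as a direct citation (``Corollary 4 in \cite{maurer2016vector}'') and then invoked in the proof of the main theorem without further justification. There is therefore no ``paper's own proof'' to compare your attempt against.

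That said, your reconstruction of Maurer's argument is correct. The hybrid/peeling scheme reducing the $n$-point statement to a single one-coordinate lemma, and the proof of that lemma via the sharp lower Khintchine inequality (Szarek's constant $1/\sqrt{2}$) followed by the symmetrization/decoupling of the double supremum, is exactly the structure of Maurer's proof. Your cautionary remarks are also on point: the constant $\sqrt{2}$ really does hinge on the sharp Khintchine bound, and the Lipschitz hypothesis on the $l_j$ must be with respect to the Euclidean norm on $\real^m$ for the argument to go through as written. The only cosmetic point worth noting is that in the hybrid step you should be explicit that the one-step lemma is applied \emph{conditionally} on all Rademacher variables other than $\sigma_{p+1}$ (with the residual $X(\cdot)$ absorbing those frozen variables), and that the outer expectation is then taken afterwards; you say this, but it is the one place where a careless reader might stumble.
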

\begin{theorem*}
With probability at least $1-\delta$, for any $\delta \in (0,1)$:
\begin{equation*}
\begin{aligned}
    & \expected{s}{\elone{\big(\pi^{*}(\cdot|s) - \phi(\cdot|s)\big)}}\leq \frac{\Delta}{2}+
    2\sqrt{2}\rademacher{\Phi}+\sqrt{\frac{2\ln{\frac{1}{\delta}}}{n}}
\end{aligned}
\end{equation*}
\end{theorem*}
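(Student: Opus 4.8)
The plan is a textbook Rademacher uniform-convergence argument, specialized so that the per-sample ``loss'' is the $\elone{\cdot}$ policy deviation and the hypothesis class is the vector-valued abstraction class $\Phi$ (viewing each $\phi$ as the map $s\mapsto\langle\phi(c\mid s)\rangle_{c\in\FancyC}\in\real^{|\FancyC|}$). Write $\ell_\phi(s):=\elone{\pi^{*}(\cdot\mid s)-\phi(\cdot\mid s)}$, which lies in $[0,2]$ since both arguments are probability distributions over $\FancyC$. The preceding reformulation of the objective as a KL divergence, together with the assumed training bound on that divergence and Pinsker's inequality, already gives that the learned $\phi$ satisfies $\frac1n\sum_{j=1}^n\ell_\phi(s_j)\le\Delta/2$ on the training sample; so it suffices to control the one-sided generalization gap $G(\phi):=\expected{s}{\ell_\phi(s)}-\frac1n\sum_{j=1}^n\ell_\phi(s_j)$ uniformly over $\Phi$ and add the two bounds.

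First I would bound $\sup_{\phi}G(\phi)$ by its expectation plus a deviation term. Regarding $\sup_{\phi}G(\phi)$ as a function of the i.i.d.\ sample $s_1,\dots,s_n$, replacing a single $s_j$ changes it by at most $2/n$ (the range of $\ell_\phi$ divided by $n$), so McDiarmid's bounded-differences inequality yields, with probability at least $1-\delta$, $\sup_{\phi}G(\phi)\le\expected{}{\sup_{\phi}G(\phi)}+\sqrt{2\ln(1/\delta)/n}$, which is exactly the last term of the theorem. Next, the standard symmetrization inequality gives $\expected{}{\sup_{\phi}G(\phi)}\le 2\,\rademacher{\ell\circ\Phi}$, where $\ell\circ\Phi=\{s\mapsto\ell_\phi(s):\phi\in\Phi\}$ is the scalar loss class.

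The crucial step is to strip the $\elone{\cdot}$ off via Lemma~\ref{maurer_lemma}. For each $j$ define $l_j(v):=\elone{\pi^{*}(\cdot\mid s_j)-v}$ on $v\in\real^{|\FancyC|}$; this is a Lipschitz function (with constant $L=1$ in the relevant norm) and, importantly, \emph{sample-dependent} --- which is precisely why we need the vector-contraction corollary stated with distinct $l_j$ rather than a single contraction, since $\pi^{*}(\cdot\mid s)$ is a fixed per-state offset that is not a member of the hypothesis class. Lemma~\ref{maurer_lemma} then gives $\rademacher{\ell\circ\Phi}\le\sqrt{2}\,\rademacher{\Phi}$, with $\rademacher{\Phi}$ the vector-valued Rademacher complexity defined in the Background. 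Chaining the three inequalities yields $\expected{}{\sup_{\phi}G(\phi)}\le 2\sqrt{2}\,\rademacher{\Phi}$ in expectation, and combining with the McDiarmid term and the training bound $\frac1n\sum_j\ell_\phi(s_j)\le\Delta/2$ for the learned $\phi$ gives $\expected{s}{\ell_\phi(s)}\le\Delta/2+2\sqrt{2}\,\rademacher{\Phi}+\sqrt{2\ln(1/\delta)/n}$, as claimed.

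I expect the main obstacle to be this contraction step: setting up the index-dependent Lipschitz functions $l_j$ correctly and pinning down the Lipschitz constant of $v\mapsto\elone{u-v}$ under whatever norm Maurer's corollary assumes --- it is $1$-Lipschitz with respect to $\elone{\cdot}$ but only $\sqrt{|\FancyC|}$-Lipschitz in Euclidean norm, so obtaining the clean constant $2\sqrt{2}$ relies on the favorable reading --- as well as being careful that the $\sup$ over $\Phi$ is taken \emph{before} introducing any sample-dependent quantities, so that the final bound legitimately applies to the data-dependent learned abstraction and not merely to a fixed one. The symmetrization and McDiarmid pieces are routine.
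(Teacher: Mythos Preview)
Your proposal is correct and follows essentially the same route as the paper: define the uniform one-sided gap $\Psi=\sup_{\phi}G(\phi)$, bound its expectation by $2\sqrt{2}\,\rademacher{\Phi}$ via symmetrization followed by Maurer's vector-contraction lemma with the sample-dependent maps $l_j(v)=\elone{\pi^{*}(\cdot\mid s_j)-v}$, apply McDiarmid with differences $2/n$ for the $\sqrt{2\ln(1/\delta)/n}$ term, and then add the empirical bound $\Delta/2$. You even flag the Lipschitz-constant subtlety (the paper simply asserts ``$\elone{\cdot}$ being $1$-Lipschitz'' without specifying the norm), which is the one place where the argument is a bit loose in both versions.
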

\begin{proof}
We build on techniques provided by \citet{bartlett2002rademacher}. First, note that $\forall \phi$ we have:
\begin{equation}
\begin{aligned}
 & \expected{s}{\elone{\pi^{*}(\cdot|s)-\phi(\cdot|s)}}-\frac{1}{n}\sum_{j=1}^{n}\elone{\pi^{*}(\cdot|s_j)-\phi(\cdot|s_j)}\leq\\
 & \underbrace{\sup_{\phi\in\Phi}\{\expected{s}{\elone{\pi^{*}(\cdot|s)-\phi(\cdot|s)}}\!-\!\frac{1}{n}\!\sum_{j=1}^n\! \elone{\pi^{*}(\cdot|s_j)-\phi(\cdot|s_j)}\}} _{:=\Psi(s_1,...,s_n)}
\end{aligned}
\label{eq:sup_property}
\end{equation}
We can bound the expected value of $\Psi$:
\begin{equation}
    \expected{}{\Psi}\leq 2\sqrt{2}\rademacher{\Phi}
    \label{eq:bound_expected_Psi}
\end{equation}
We do so as follows:
\begin{eqnarray*}
     &&\expected{}{\Psi}= \expected{s_j}{\sup_{\phi\in\Phi}\expected{s}{\frac{1}{n} \sum_{j=1}^n \elone{\pi^{*}(\cdot|s)-\phi(\cdot|s)}\\
     &&- \elone{\pi^{*}(\cdot|s_j)-\phi(\cdot|s_j)}}}\\
     &&= \expected{s_j}{\sup_{\phi\in\Phi}\expected{s'_j}{\frac{1}{n} \sum_{j=1}^n \elone{\pi^{*}(\cdot|s'_j)-\phi(\cdot|s'_j)}-\\ & &\elone{\pi^{*}(\cdot|s_j)-\phi(\cdot|s_j)}}},
\end{eqnarray*}
since $s'_j$ and $s_j$ are distributed similarly. 
\begin{eqnarray*}
     && \leq \expected{s_j,s'_j}{\sup_{\phi\in\Phi} \frac{1}{n}\sum_{j=1}^n \elone{\pi^{*}(\cdot|s'_j)-\phi(\cdot|s'_j)}\\
     &&-\elone{\big(\pi^{*}(\cdot|s_j)-\phi(\cdot|s_j)}}\\
     && \text{(Due to Jansen's inequality)}\\
      &&= \expected{s_j,s'_j,\sigma_{ji}}{\sup_{\phi\in\Phi} \frac{1}{n} \sum_{j=1}^n \sigma_j \Big(\elone{\pi^{*}(\cdot|s'_j)-\phi(\cdot|s'_j)}-\\ && \elone{\pi^{*}(\cdot|s_j)-\phi(\cdot|s_j)}\Big)}\\
      &&\text{(Due to Rademacher variables uniform from $\{\pm 1\}$})\\
      &&=2 \expected{s_j,s'_j,\sigma_{ji}}{\sup_{\phi\in\Phi} \frac{1}{n} \sum_{j=1}^n \sigma_j \elone{\pi^{*}(\cdot|s'_j)-\phi(\cdot|s'_j)}}\\
      &&\leq2\sqrt{2} \expected{s_j,\sigma_{ji}}{\sup_{\phi\in\Phi} \frac{1}{n} \sum_{j=1}^n\sum_{i=1}^{|A|} \sigma_{ji} \phi(a_i|s_j)}\\
      && \text{(Due to Lemma (\ref{maurer_lemma}) and $\elone{\cdot}$ being 1-Lipschitz)}\\
      && =2\sqrt{2}\rademacher{\Phi} \ .
\end{eqnarray*}
Further, it is easy to show that $\Psi$ satisfies
$$|\Psi(s_1,...,s_j,...s_n)-\Psi(s_1,...,s'_j,...s_n)|\leq\frac{2}{n}.\ $$ Applying MacDiarmid's inequality, for any $\delta \in (0,1)$:
\begin{equation}\Pr(\Psi\leq \expected{}{\Psi}+\sqrt{\frac{2\ln{\frac{1}{\delta}}}{n}})\geq 1- \delta\ .
\label{eq:MacDiarmid}
\end{equation}
Combining (\ref{eq:MacDiarmid}) with (\ref{eq:bound_expected_Psi}), we can write:
$$\Pr(\Psi\leq 2\sqrt{2}\rademacher{\Psi}+\sqrt{\frac{2\ln{\frac{1}{\delta}}}{n}})\geq 1- \delta\ .$$
Now recall from $(\ref{eq:sup_property})$ that:
\begin{equation*}
\begin{aligned}
 & \expected{s}{\elone{\pi^{*}(\cdot|s)- \phi(\cdot|s)}}\leq\frac{1}{n}\sum_{j=1}^{n}\elone{\pi^{*}(\cdot|s_j)-\phi(\cdot|s_j)}+\Psi,
\end{aligned}
\end{equation*}
we can conclude the proof by claiming that
\begin{equation*}
\begin{aligned}
 & \expected{s}{\elone{\pi^{*}(\cdot|s)-\phi(\cdot|s)}}\leq\\ 
 & \frac{1}{n}\sum_{j=1}^{n}\elone{\pi^{*}(\cdot|s_j)-\phi(\cdot|s_j)}+2\sqrt{2}\rademacher{\Phi}+\sqrt{\frac{2\ln{\frac{1}{\delta}}}{n}}\ .
\end{aligned}
\end{equation*}
with probability at least $1-\delta$.
\end{proof}
It is clear to see that, given the above bound, Lemma \ref{lem:l1_val_bound} ensures bounded value loss with probability at least $1-\delta$.

\section{Experiments}

We now present our experimental findings. At a high level, we conducted two types of experiments:
\begin{enumerate}
    \item Single Task: We collected an initial data set $D_{train}$ to be used to train the state abstraction $\phi$ based on MDP $M$. Then, we evaluate the performance of tabular Q-Learning on $M$, given this state abstraction. Since each $M$ we test with has continuous state, tabular Q-Learning cannot be applied. However, we assess the performance of tabular Q-learning given $\phi$. These experiments provide an initial sanity check as to whether the state abstraction can facilitate learning at all.
    
    \item Multi-Task: We next considered a collection of MDPs $\{M_1, \ldots, M_n\}$. We collected an initial data set $D_{train}$ of $(s,a)$ tuples from a (strict) subset of MDPs in the collection. We used $D_{train}$ to construct a state abstraction $\phi$, which we then gave to Q-Learning to learn on one or many of the remaining MDPs. Critically, we evaluate Q-Learning on MDPs {\it not} seen during the training of $\phi$.
\end{enumerate}

For each of the two experiment types, we evaluate in three different domains, Puddle World, Lunar Lander, and Cart Pole. Open-source implementation of Lunar Lander and Cart Pole are available on the web \cite{gym}. An implementation of Puddle World is included in our code. Full parameter settings and other experimental details are available in our anonymized code, which we make freely available for reproduction and extension.\footnote{\url{https://github.com/anonicml2019/icml_2019_state_abstraction}}

\begin{figure*}[h]
    \centering
   \hspace{7mm} \subfloat[Puddle World~\label{fig:puddle}]{\includegraphics[width=0.24\textwidth]{figures/puddle/puddle.jpg}} \hspace{12mm}
    \subfloat[Lunar Lander~\label{fig:lunar}]{\includegraphics[width=0.24\textwidth]{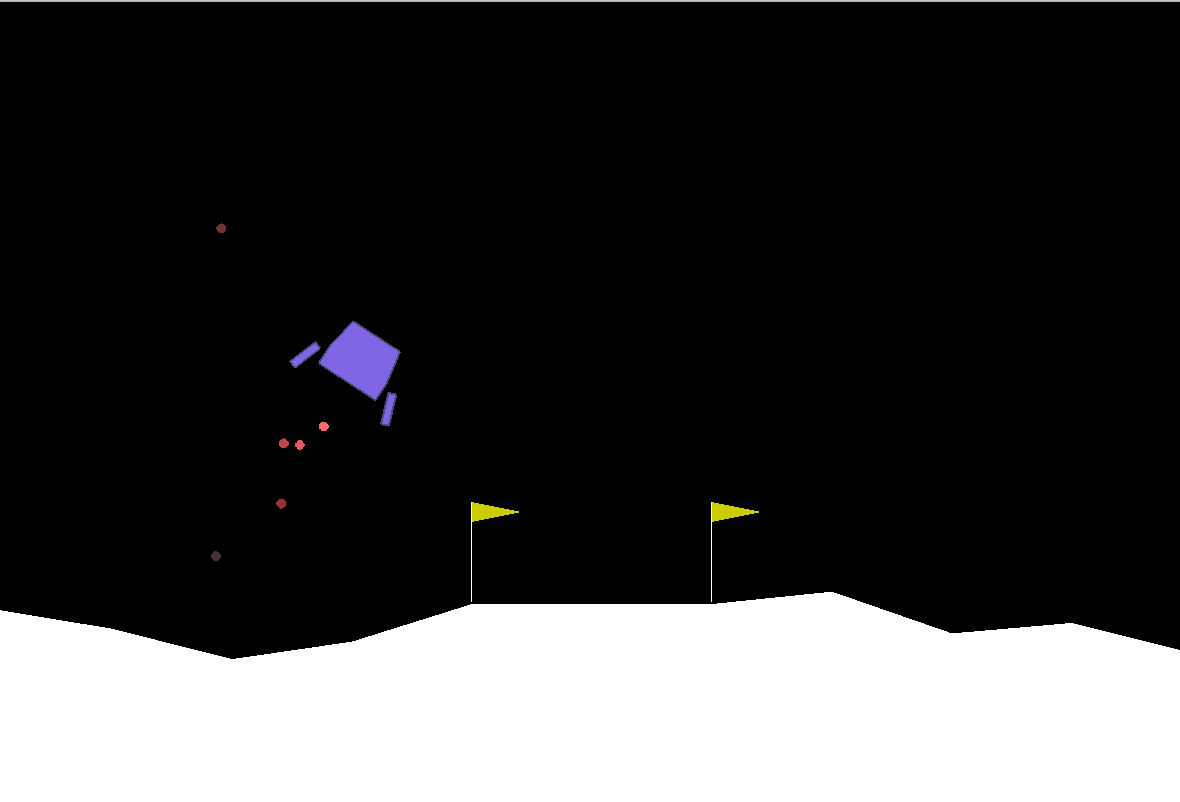}}\hspace{8mm}
    \subfloat[Cart Pole]{\includegraphics[width=0.24\textwidth]{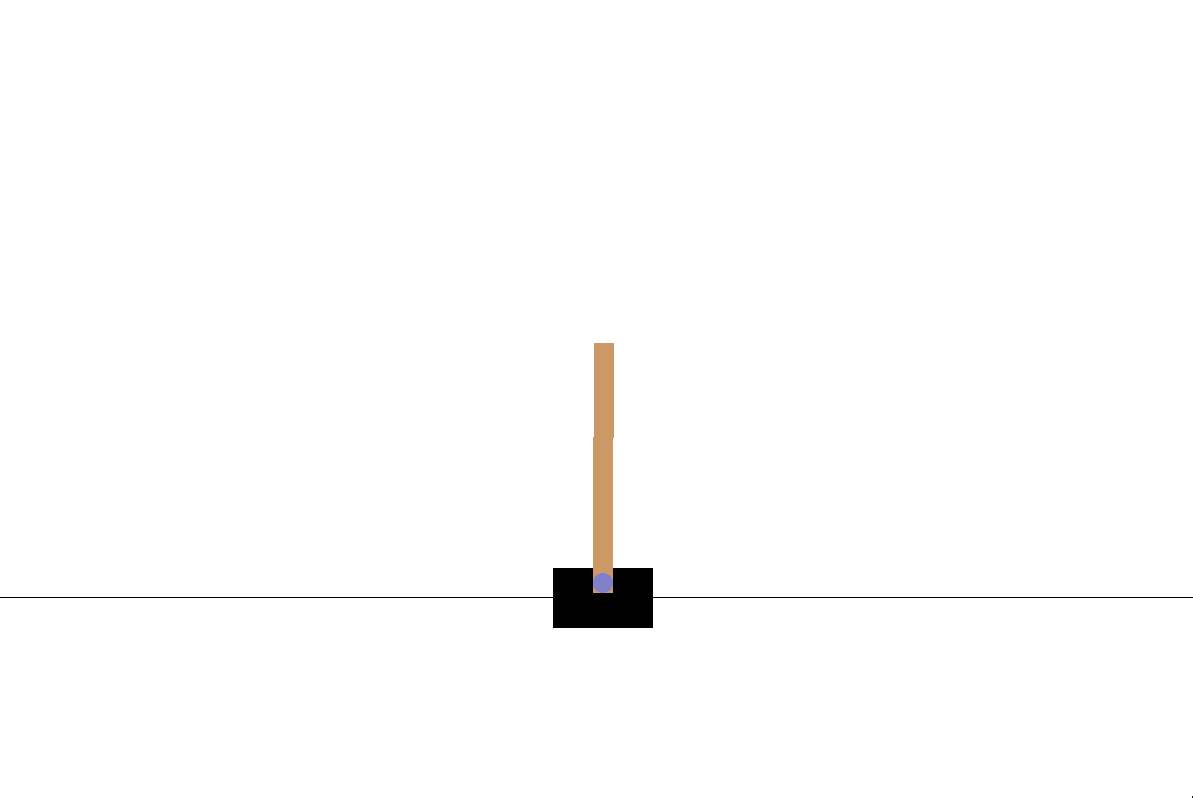}} \\
    
    \subfloat[Single Task Puddle World \label{fig:puddle_results}]{\includegraphics[width=0.3\textwidth]{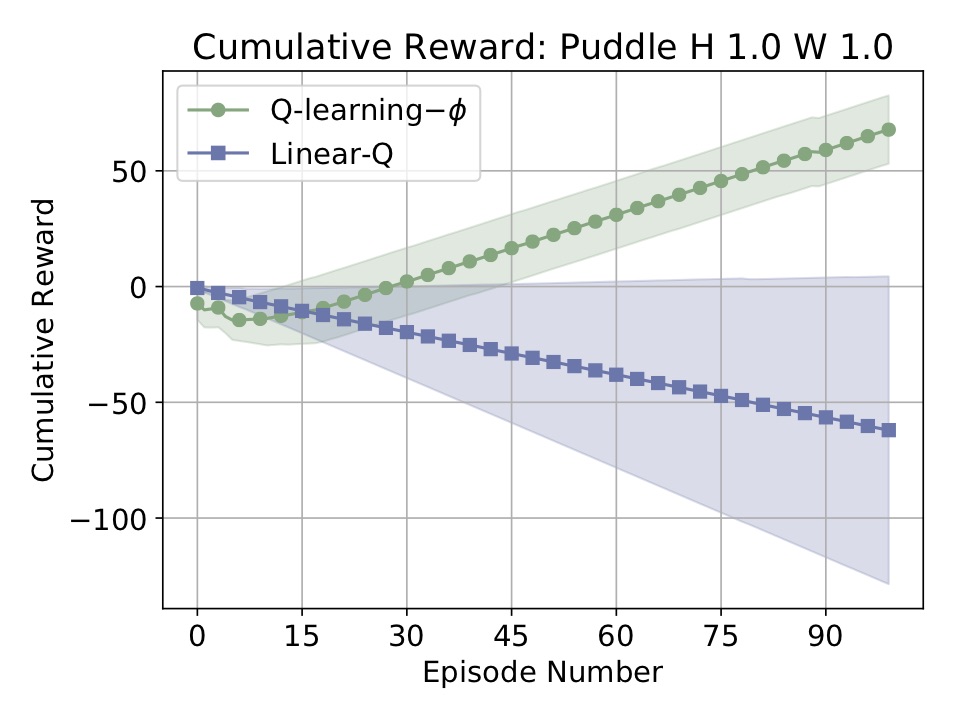}}
    \subfloat[Single Task Lunar Lander \label{fig:lunar_results}]{\includegraphics[width=0.3\textwidth]{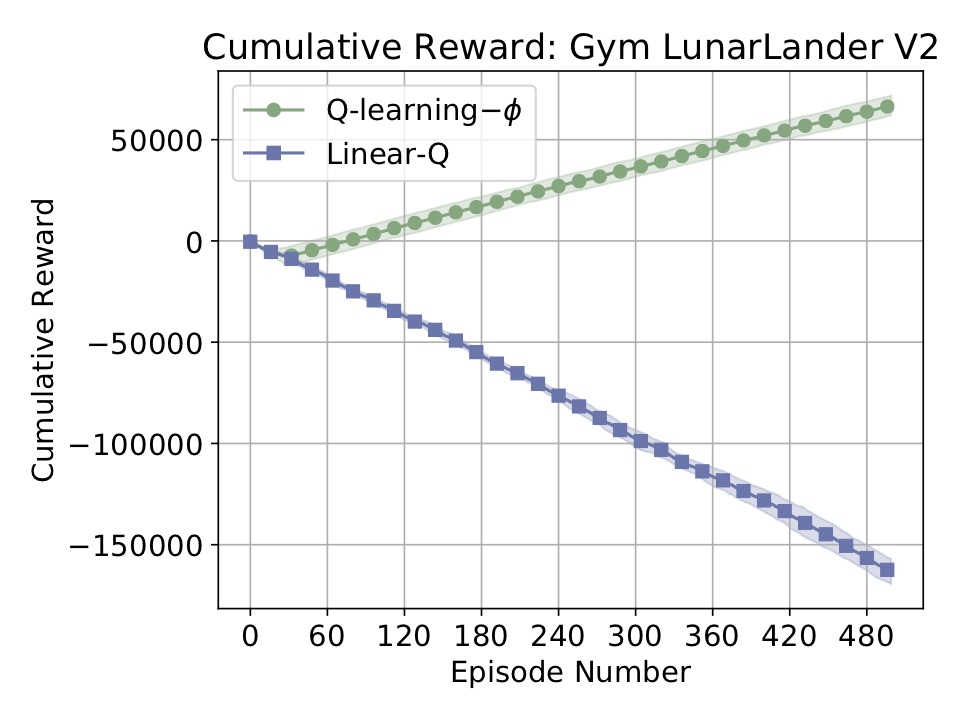}}
    \subfloat[Single Task Cart Pole \label{fig:cartpole_results}]{\includegraphics[width=0.3\textwidth]{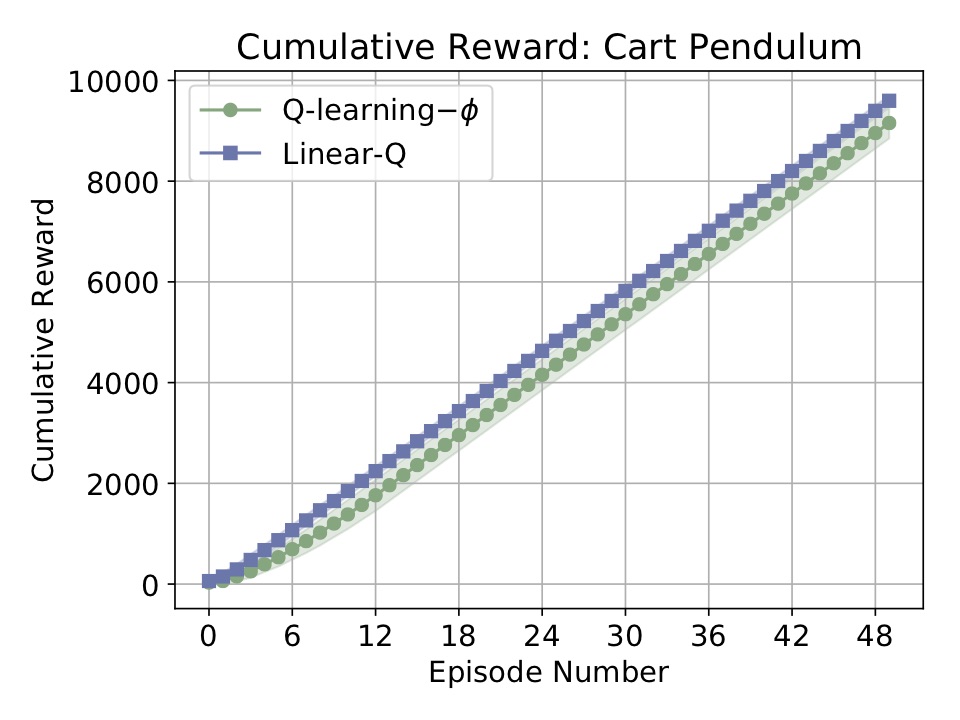}} \\
    
    \subfloat[Puddle World Transfer \label{fig:puddle_transfer}]{\includegraphics[width=0.3\textwidth]{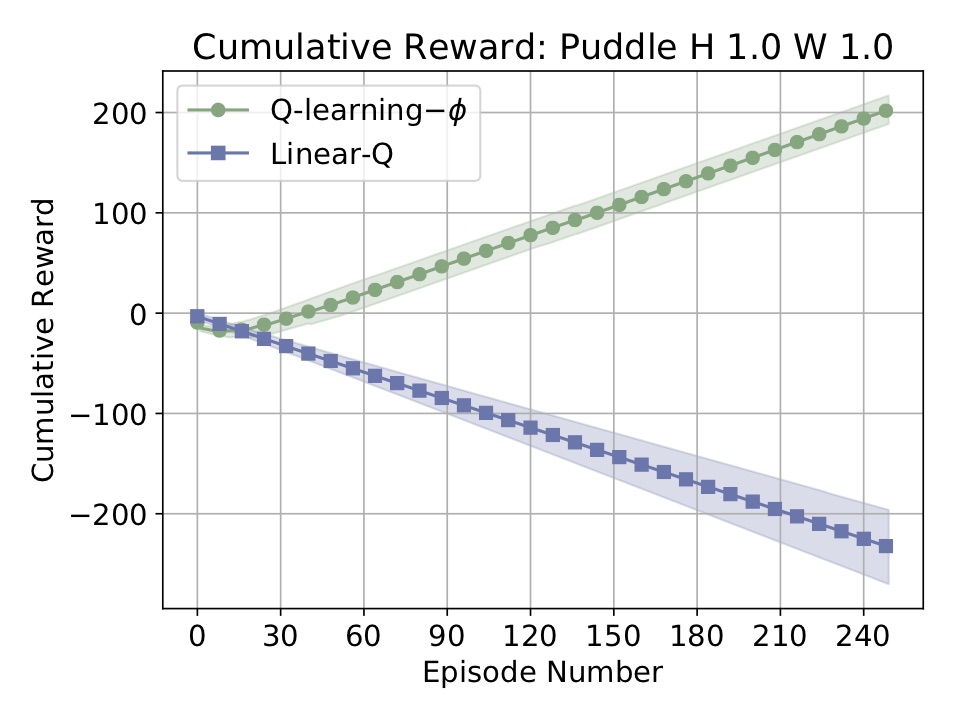}}
    \subfloat[Lunar Lander Transfer \label{fig:lunar_transfer}]{\includegraphics[width=0.3\textwidth]{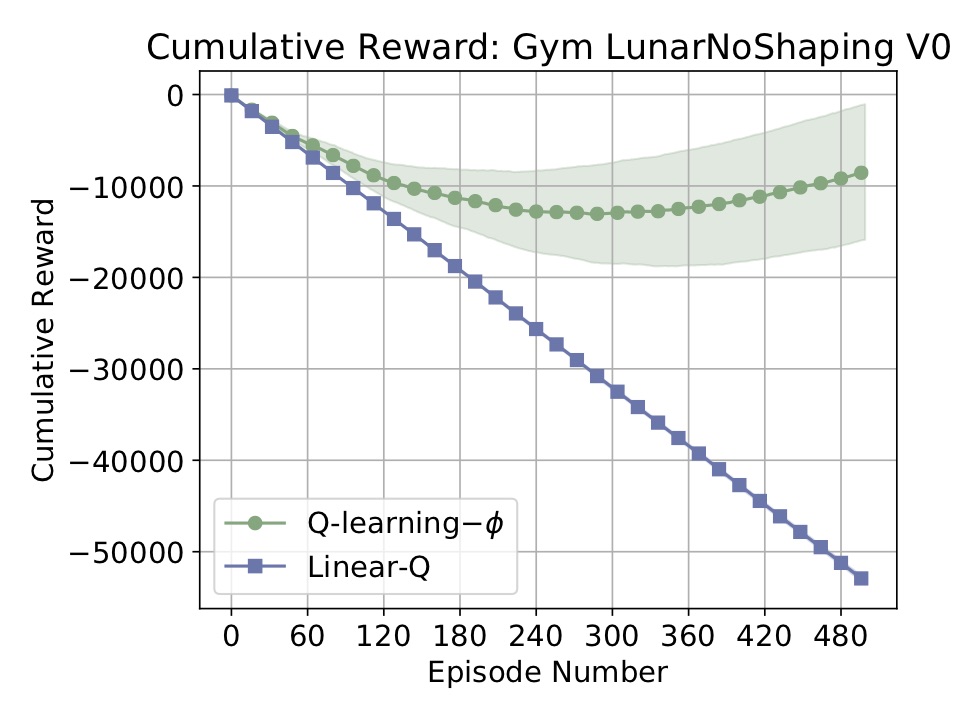}}
    \subfloat[Cart Pole Transfer \label{fig:cartpole_transfer}]{\includegraphics[width=0.3\textwidth]{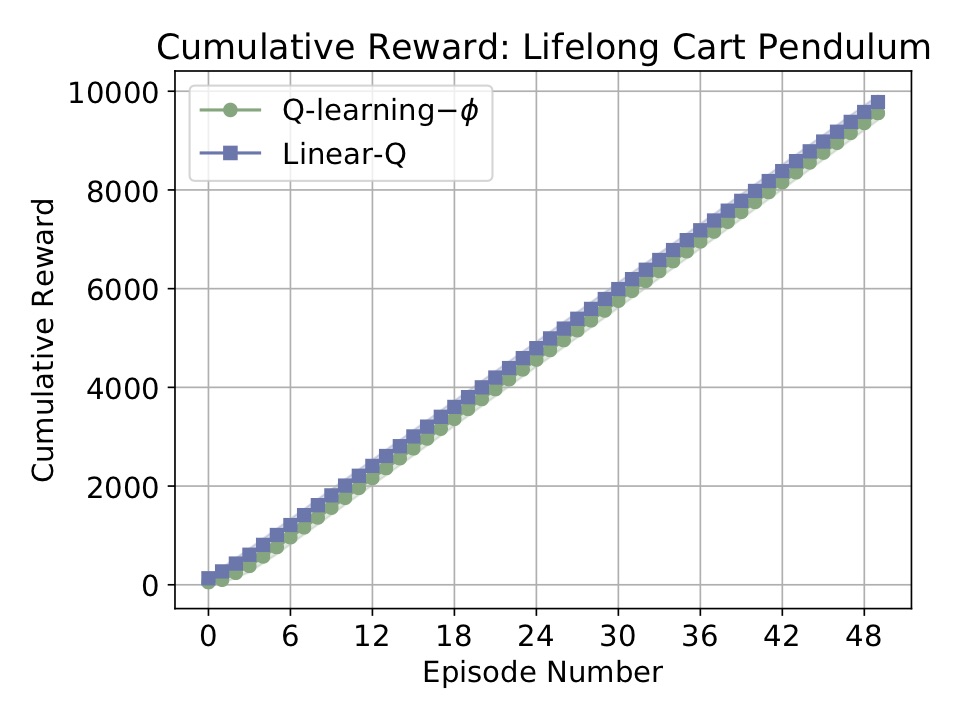}}
    \caption{Learning curves for the single task experiments (top) and the transfer experiments( bottom). Each line indicates the average cumulative reward received by the agent, reported with 95\% confidence intervals.}
    \label{fig:transfer_results}
\end{figure*}

\subsection{Puddle World}
Our first experiment used the Puddle World MDP~\cite{boyan1995generalization}, a continuous grid world in which states are represented by two coordinates, $x \in [0:1]$ and $y \in [0:1]$. The agent is initialized at (0.25, 0.6), and is tasked with getting within .0025 of the goal location, which is placed at (1.0,1.0). Two large rectangles are placed in the domain, which produce $-1$ reward for any time-step in which the agent is in either rectangle. All other transitions receive 0 reward, except for transitions into the goal state, which yield 1 reward. The agent is given four actions, \texttt{up}, \texttt{down}, \texttt{left}, and \texttt{right}. Each action moves the agent .05 in the given direction with a small amount of Gaussian noise.

In the single task Puddle World experiment, we trained the abstraction based on 4000 sampled $(s,\pi_E(s),r,s')$ quadruples from the puddle instance pictured in Figure~\ref{fig:puddle} with only goal $G_1$ active, and $\pi_E$ the training policy. The samples are drawn from $U(x,y)$, the joint uniform probability distribution over $x$ and $y$. Notably, since the domain is continuous, the learning agent will necessarily find itself in states it did not see during training time. We experiment with tabular Q-Learning paired with the abstraction, denoted Q-Learning-$\phi$ (green), and Q-Learning with a linear function approximator, denoted Linear-Q (blue). We set $\epsilon = 0.1$ and $\alpha = 0.005$ for both algorithms. It is worth noting that since the training of $\phi$ takes place in the same domain that we test Q-Learning-$\phi$, we should anticipate that the resulting pair learn quite well (assuming the $\phi$ is capable of supporting good learning at all). 

Results for the single task case are presented in Figure~\ref{fig:puddle_results}. The figures present the cumulative reward averaged over all 25 runs of the experiment, with $95\%$ confidence intervals. As expected, we find that Q-Learning-$\phi$ consistently learns to navigate to the goal in only a few episodes. Conversely, the linear approach fails to reliably find the goal, but does occasionally learn to avoid running into the puddle, resulting in high variance performance.

In the multi-task Puddle World experiment, we train $\phi$ using three out of the four possible goals (each goal defines an independent MDP), located in the four corners. The held-out goal is chosen uniformly randomly from the four possible goals. We leave one goal out of training to be used for testing. All parameters are set as in the single task case, except that the agents now learn for 250 episodes instead of 100. The abstraction learning algorithm was given a budget of 81 abstract states.

Results are presented in Figure~\ref{fig:puddle_transfer}. Notably, the only positive reward available comes from reaching the goal, so we can conclude from the low-variance positive slope of the learning curve that Q-Learning with $\phi$ still learns to reach the goal reliably in only a few episodes. Moreover, Q-learning with linear function approximation fails to ever reliably find the goal.

\subsection{Lunar}
We next experiment with Lunar Lander, pictured in Figure~\ref{fig:lunar}. The agent controls the purple ship by applying thrusters on exactly one of the left, right, bottom, or no sides of the ship. The state is characterized by 8 state variables, 2 dimensional position and velocity information, angle and angular velocity, and two boolean flags that are active only if the corresponding leg touches a surface. The agent receives +100 reward when it lands or -100 when it crashes, and an additional +10 reward for each leg of the ship touching the ground. Critically, the agent receives -.3 reward every timestep it uses a thruster for each of the three thrusters available. The default reward signal in gym implementation includes a shaped reward that gives the agent additional positive reward the closer it gets to the landing zone, positioned at (0,0) on the screen (the flat area with the flags). In each run, the mountainous terrain to the left and right of the landing zone changes.

In the single task case, we train the state abstraction based on 10,000 $(s,\pi_E(s),r,s')$ quadruples, sampled according to the training policy $\pi_E$ (which reliably either lands or gets high shaped reward). We then give the resulting abstraction (which maps from the continuous 8-dimensional states to a discrete state space), to tabular Q-Learning and evaluate its learning performance.

Results are presented in Figure~\ref{fig:lunar_results}. Notably, again, Q-Learning with $\phi$ learns a policy that reliably lands the ship in around 50 episodes. In contrast, the linear Q-Learning agent fails to ever learn a reasonable policy even after collecting 8 times the data. The results suggest that again, $\phi$ is capable of supporting sample efficient learning of a high value policy.

In the transfer case, we train $\phi$ similarly to the single task case. Then, we define a new instance of the Lunar problem with {\it no} shaped reward. In this new, non-shaped variant, the agent receives $-.3$ for every use of a thruster, $-100$ for crashing, $+100$ for landing safely, and $+10$ anytime a leg touches the ground. This variant of Lunar is extremely challenging, and to our knowledge, no known algorithm has solved this problem from scratch.

Learning curves are presented in Figure~\ref{fig:lunar_transfer}, and a plot of the successful landing rate over time is also provided in Figure~\ref{fig:lunar_success}. Here we find the strongest support for the usefulness of $\phi$: Q-Learning-$\phi$ learns to land the ship more than half of the time after around 250 episodes. By the end of learning (500 episodes), Q-Learning with $\phi$ has learned to land the ship four out of every five trials.

\begin{figure}
    \centering
    \subfloat[Lunar Success Rate~\label{fig:lunar_success}]{\includegraphics[width=0.48\columnwidth]{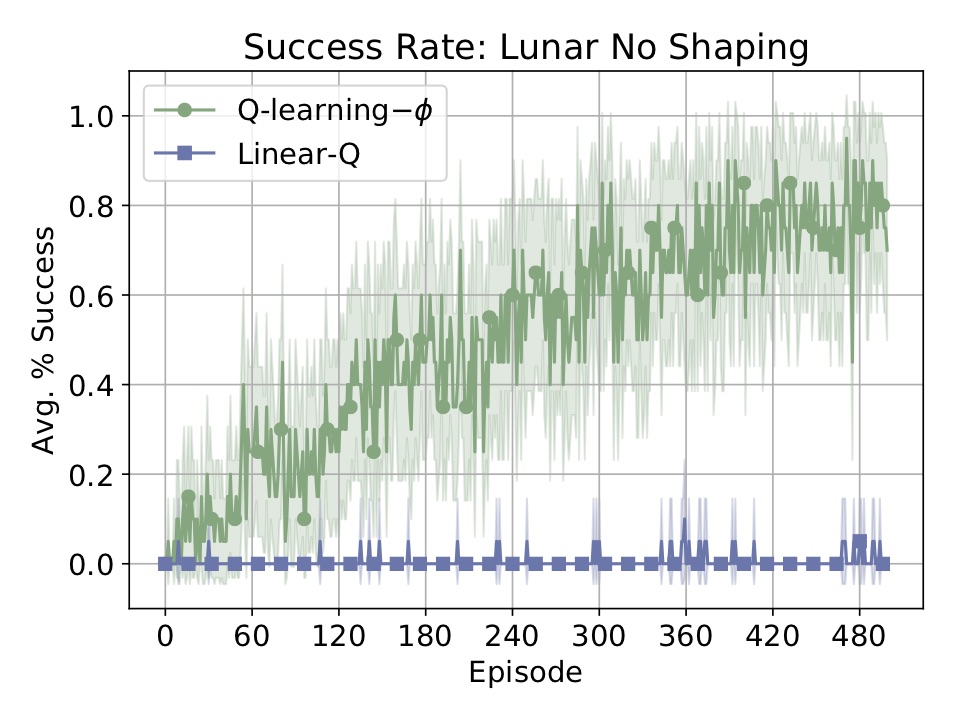}}
    \subfloat[Effect of $|D|$ on RL~\label{fig:puddle_training_data}]{\includegraphics[width=0.48\columnwidth]{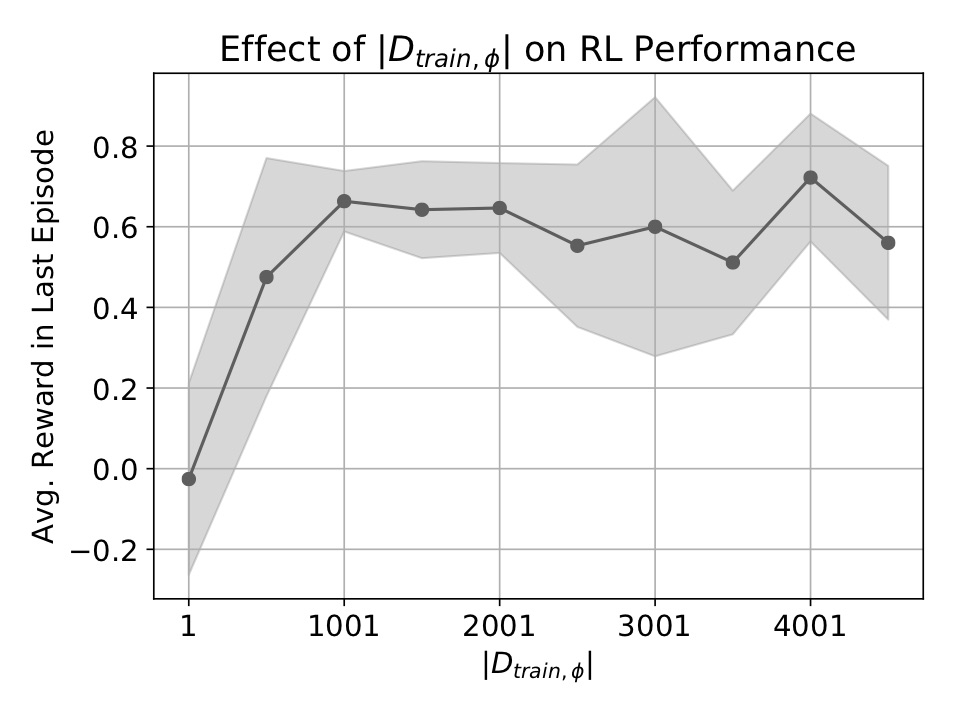}}
    \caption{The average rate of successful landings over time in Lunar Lander without shaping (left) and the effect of the number of training data on RL performance in Puddle World (right). In the right plot, a point indicates the cumulative reward received by Q-Learning by the end of learning with the $\phi$ trained on the data set of the given size, averaged over 10 runs of the experiment, reported with 95\% confidence intervals.}
    \label{fig:joint_lunar_success_training_size}
\end{figure}

\subsection{Cart Pole}

Cart Pole is a classical control problem first studied by~\citet{widrow1964pattern} in which an agent must learn to balance a pole upright by moving a cart at the base of the pole along a horizontal track. The state of the MDP consists of four variables: $(x, \vec{x}, \omega, \vec{\omega})$, denoting the position, velocity, angle, and angular velocity respectively. The agent is given only two actions: move left and move right. The reward signal is +1 when the pole is balanced ($\omega \in (-\pi/9, \pi/9)$, and otherwise -10. When the pole moves outside of the positive reward region, the agent receives -10 reward and the MDP terminates. 

We train $\phi$ based on a data set of 1000 $(s,\pi_E(s),r,s')$ quadruples collected from the training policy, with a learning rate of $0.001$. Then, we give Q-Learning $\phi$ to learn on the same MDP for 50 episodes, with a max of 200 steps per episode. We repeat the experiment 20 times and report average learning curves and 95\% confidence intervals.

Results are presented in Figure~\ref{fig:cartpole_results}. Here we find that LinearQ is able to efficiently learn a high value policy, along with Q-Learning-$\phi$. From the set of single task experiments, we conclude that $\phi$ is at least sufficient to support tabular learning in continuous state MDPs in which $\phi$ was trained.

In the multi-task case, we train $\phi$ with the same parameters as in the single task experiment. Then, we build a collection of 5 test MDPs where we change gravity from $9.8 m/s^2$ to each of 5.0, 6.0, 8.0, and 12.0 $m/s^2$. We then evaluate Q-Learning-$\phi$ on its average performance across each of these 5 test MDPs. When learning begins, we sample one MDP from the test set and let the agent interact with the sampled MDP for 200 episodes. After the last episode, the agent samples a new MDP uniformly from the same set of 5 and learns, repeating this process for 20 rounds.

Results are presented in Figure~\ref{fig:cartpole_transfer}. The learning curve is very abrupt; after only a handful of episodes, both learning algorithms again reliably learns to balance the pole, regardless of how much gravity has changed from the original problem. Here we find support for the use of a linear approximator in some settings, but naturally, as the environment becomes complex, the linear approach fails (as in the puddle and lunar experiments)


As a final experiment, we explore the effect of the size of the training data set used to train $\phi$ on the performance of the downstream RL task that $\phi$ will be used for. Our goal is to investigate how many samples are sufficient for the learned state abstraction to reliably produce good behavior in practice. In the experiment, we trained $\phi$ for $N = 1$ up to $N = 4501$ samples (by increments of 500) to convergence. We then gave the resulting $\phi$ to Q-Learning and let it learn in Puddle World with all parameters set as in the single task Puddle experiment, and report the cumulative reward of Q-Learning's final episode.

Results are presented in Figure~\ref{fig:puddle_training_data}. As expected, when the data set used to train $\phi$ only contains a single point, RL performance decays substantially, with Q-Learning receiving 0 reward on average. However, somewhat surprisingly, with only 501 data points, the computed $\phi$ is sufficient for Q-Learning to obtain an average cumulative reward of 0.5. Such an average score indicates that, in about half of the runs, Q-Learning-$\phi$ reliably found the goal, and, in the other half, it avoided the puddle.

\section{Conclusion and Future Work}
 We introduced an algorithm for state abstraction and showed that transferring the learned abstraction to new problems can enable simple RL algorithms to be effective in continuous state MDPs. We further studied our abstraction learning process through the lens of statistical learning theory and Rademacher complexity, leading to the result that our  learning algorithm can represent high value abstract policies. We believe that this work takes an important step towards the problem of automatic abstraction learning in RL.

One avenue for future research extends our approach to continuous action spaces; in this case, we can no longer leverage the fact that the number of actions are finite. Moreover, learning with continuous state {\it and} action is notoriously difficult and ripe for abstraction to make a big impact. Additionally, there are open questions on learning abstractions in a lifelong setting, in which agents continually refine their abstractions after seeing new problems. Lastly, it is unknown as to how different kinds of abstractions effect the sample complexity of RL~\cite{kakade2003sample}. Thus, we foresee connections between continuous state exploration~\cite{pazis2013pac} and learning abstractions that can lower the sample complexity of RL.

\bibliography{abstraction}
\bibliographystyle{icml2019}

\end{document}